\title{Can a Transformer Represent a Kalman Filter?}
\author{Gautam Goel \and Peter Bartlett}
\date{Simons Institute, UC Berkeley }
\newtheorem{theorem}{Theorem}
\newtheorem{fact}{Fact}
\DeclareMathOperator*{\expect}{\mathbb{E}}
\begin{document}

\maketitle

\begin{abstract}
 Transformers are a class of autoregressive deep learning architectures which have recently achieved state-of-the-art performance in various vision, language, and robotics tasks. We revisit the problem of Kalman Filtering in linear dynamical systems and show that Transformers can approximate the Kalman Filter in a strong sense. Specifically, for any observable LTI system we construct an explicit causally-masked Transformer which implements the Kalman Filter, up to a small additive error which is bounded uniformly in time; we call our construction the Transformer Filter. Our construction is based on a two-step reduction. We first show that a softmax self-attention block can exactly represent a Nadaraya–Watson kernel smoothing estimator with a Gaussian kernel. We then show that this estimator closely approximates the Kalman Filter. We also investigate how the Transformer Filter can be used for measurement-feedback control and prove that the resulting nonlinear controllers closely approximate the performance of standard optimal control policies such as the LQG controller. 
\end{abstract}

\section{Introduction}
Transformers are a class of autoregressive deep learning architectures designed for various sequence modelling tasks, first introduced in \cite{vaswani2017attention}. Transformers have quickly emerged as the best performing class of deep learning models across a variety of challenging domains, including computer vision, natural language processing, and robotics, and have also been studied in the context of reinforcement learning and decision-making (e.g. \cite{chen2021decision, lee2023supervised, lin2023transformers, zheng2022online}). While the empirical successes of Transformers are exciting, we still lack a formal theory that explains what Transformers can do and why they work.
In this paper, we study how Transformers can be used for filtering and control in linear dynamical systems. We ask perhaps the most basic question one could ask: can a Transformer be used for Kalman Filtering? The Kalman Filter is foundational in optimal control and a crucial component of the Linear-Quadratic-Gaussian (LQG) controller. If Transformers were unable to perform Kalman Filtering, then the use of Transformers in signal processing and control would be suspect; conversely, establishing that Transformers can indeed perform Kalman Filtering is a crucial first step towards establishing the viability of Transformers in these domains. 

In the mathematical theory of deep learning, three questions naturally arise. First, which functions can a given deep learning architecture represent? Second, when trained on data, what function does the deep learning system actually learn? Lastly, how well does this learned function generalize on new data? We focus on the first of these questions and leave the other two for future work. Specifically, we investigate the following questions.  First, is the nonlinear structure of a Transformer compatible with a Kalman Filter at all? This is not obvious; it is possible \textit{a priori} that no matter how a Transformer is implemented, the softmax nonlinearity in the self-attention block will cause the state estimates of the Transformer and the Kalman Filter to diverge over time. Second, if it is possible to represent the Kalman Filter with a Transformer, what would that Transformer look like? How should the states and observations be represented within the Transformer? It is known that \textit{positional encoding} improves the performance of Transformers in some tasks - is it necessary for Kalman Filtering? How large must the Transformer must be, e.g., how large must the embedding dimension be, and how many self-attention blocks are required?

\subsection{Key contributions}

We construct an explicit Transformer which implements the Kalman Filter, up to a small additive error; we call our construction the Transformer Filter. Our construction is based on a two-step reduction. First, we show that a self-attention block can exactly represent a Nadaraya–Watson kernel smoothing estimator with a Gaussin kernel. We select a specific covariance in our Gaussian kernel with a system-theoretic interpretation: it measures how closely a previous state estimate matches the most recent state estimate, where the measure of ``closeness" is the $\ell_2$ distance between the one-step Kalman Filter updates using each of the state estimates. The kernel takes as inputs nonlinear embeddings of the previous state estimates and observations; these embeddings have quadratic dependence on the size of the underlying state-space model. In particular, if the state-space model has an $n$-dimensional state and $p$-dimensional observations, the kernel we construct takes as input embeddings of dimension $O((n + p)^2)$. The second step in our construction is to show that this kernel smoothing algorithm approximates the Kalman Filter in a strong sense. Specifically, for every $\varepsilon > 0$, we show that by increasing a temperature parameter $\beta$ in our kernel, we can ensure that the sequence of state estimates generated by the Transformer Filter is $\varepsilon$-close to the sequence of state estimates generated by the Kalman Filter.
A noteworthy aspect of our construction is that it does not use any positional embedding; permuting the history of state estimates and observations has no effect on the state estimates generated in subsequent timesteps.

We next investigate how the Transformer Filter can be incorporated into a measurement-feedback control system. A key technical challenge is to understand the closed-loop dynamics that are induced by the Transformer Filter; since the state-estimates produced by the Transformer Filter are a nonlinear function of the observations, the resulting closed-loop map is also nonlinear. This means that standard techniques for establishing stability of the system, such as bounding the eigenvalues of the closed-loop map, cannot be used. We show that the Transformer Filter can closely approximate an LQG controller, in the following sense: for every $\varepsilon > 0$, we construct a controller using the Transformer Filter which generates a state sequence that is $\varepsilon$-close to the state sequence generated by the LQG controller. A consequence of this result is that the controllers we construct are weakly stabilizing in the following sense; while they may not drive the state all the way to zero, they are guaranteed to drive the state into a small ball centered at zero. Our result also implies that the cost incurred by our new controller can be driven arbitrarily close to the optimal cost achieved by the LQG controller. All of our approximation results also hold when the reference algorithm is taken to be an $H_{\infty}$ filter or $H_{\infty}$ controller.

\section{Preliminaries}
\subsection{Filtering and Control}
The first problem we consider is \textit{Filtering in Linear Dynamical Systems}. In this problem, we consider a partially observed linear system $$x_{t+1} = Ax_t + w_t, \hspace{5mm} y_t = Cx_t + v_t,$$ where  $x_t \in \mathbb{R}^n$ is an unknown state and $y_t \in \mathbb{R}^p$ is a noisy linear observation of the state; the variables $w_t$ $v_t$ are exogenous disturbances which perturb the state and observation. The state is initialized at time $t = 0$ to some fixed state $x_0$. The task of filtering is to sequentially estimate the state sequence given the observation sequence. We focus on the strictly causal setting, where the filtering algorithm estimates the state $x_t$ after observing $y_0, \ldots, y_{t-1}$. The best-known algorithm for filtering in linear dynamical systems is undoubtedly the \textit{Kalman Filter}, which is the mean-square-optimal linear filter when the disturbances are stochastic. More precisely, if $\{w_t\}_{t\geq 0}$ and $\{v_t\}_{t \geq 0}$ are assumed to be independent, white noise processes, then the estimate $\widehat{x}^{\star}_{t}$ produced by the Kalman Filter satisfies $$\widehat{x}^{\star}_{t} = \inf_z \expect \left[ \|z - x_{t}\|^2 \right],$$ where the infimum is taken over all linear functions $z(y_0, \ldots y_t)$ of the observations. In the special case where the disturbances are Gaussian, the Kalman Filter estimate is also a maximum likelihood estimate of the state conditioned on the observations. The Kalman Filter has the following recursive form: the prediction of the next state given the observations $y_0, \ldots y_{t-1}$ is
\begin{equation} \label{kalman-filter}
    \widehat{x}^{\star}_t = (A - LC)\widehat{x}^{\star}_{t-1} +  Ly_{t-1},
\end{equation}
where $L$ is a fixed matrix called the \textit{Kalman gain} and we initialize $x^{\star}_0 = x_0$. We also note that the $H_{\infty}$ filter has an identical recursive form to the Kalman Filter, except with a different gain matrix $L$ \cite{hassibi1999indefinite}. 

The second problem we consider is \textit{Measurement-Feedback Control in Linear Dynamical Systems}. In this problem, we again consider a partially observed linear system, but the system is now augmented to include a control input $u_t \in \mathbb{R}^m$:
$$x_{t+1} = Ax_t + Bu_t +  w_t, \hspace{5mm} y_t = Cx_t + v_t.$$
The goal of the controller is to select the control action $u_t$ to regulate the state using only  the observations $y_0, \ldots, y_t$. In the Linear-Quadratic-Gaussian (LQG) model, the disturbances  $\{w_t\}_{t\geq 0}$ and $\{v_t\}_{t \geq 0}$ are once again assumed to be independent, white noise processes, and the control actions are selected to minimize the infinite-horizon cost $$\lim_{T \rightarrow \infty} \frac{1}{T} \expect_{\{w_t, v_t\}_{t\geq 0}} \left[\sum_{t = 0}^T x_t^{\top}Qx_t + u_t^{\top}Ru_t \right]. $$ It is known that in this case the optimal policy is to use the Kalman Filter to produce a state estimate $\widehat{x}^{\star}_t$ and then to pick the control actions as a linear function of the estimate \cite{hassibi1999indefinite}. The Kalman Filter estimate is adjusted to account for the influence of the control input, so the LQG policy is 
\begin{equation} \label{measurement-feedback}
    \widehat{x}^{\star}_t = (A + BK - LC)\widehat{x}^{\star}_{t-1} +  Ly_{t-1}, \hspace{1cm} u_t = K \widehat{x}^{\star}_t,
\end{equation}
where $K$ is called the \textit{state-feedback matrix}. 
Other measurement-feedback controllers of the general form (\ref{measurement-feedback}) include the $H_{\infty}$ measurement-feedback controller, which uses a different choice of $L$ and $K$ \cite{hassibi1999indefinite}. 

We assume that the pair $(A, C)$ is observable, and the pair $(A, B)$ is controllable; we refer to \cite{kailath2000linear} for background on linear systems. We let $\| A \|$ denote the spectral norm of a matrix $A$. We make repeated use of the following facts.

\begin{fact}
Let $A \in \mathbb{R}^{q \times q}$ be any stable matrix, i.e., any matrix with spectral radius strictly less than 1. There exist matrices $M, \theta$ such that $A = M\theta M^{-1}$ and $\|\theta \| < 1$.
\end{fact}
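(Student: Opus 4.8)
The plan is to reduce the claim to the solvability of a discrete-time Lyapunov inequality. The key observation is that finding an invertible $M$ with $\|M^{-1}AM\| < 1$ (spectral norm) is equivalent to finding a symmetric positive definite $P$ with $A^{\top} P A \prec P$: writing $P = M^{-\top}M^{-1}$, the condition $\|M^{-1}AM\| < 1$ says $(M^{-1}AM)^{\top}(M^{-1}AM) \prec I$, which rearranges to $M^{\top}A^{\top}PAM \prec I$, i.e. $A^{\top}PA \prec P$. So it suffices to exhibit such a $P$, factor it as $P = M^{-\top}M^{-1}$, and set $\theta := M^{-1}AM$.

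To construct $P$, I would use the stability of $A$: since the spectral radius $\rho(A) < 1$, Gelfand's formula gives a constant $C$ and $\gamma \in (\rho(A),1)$ with $\|A^{k}\| \le C\gamma^{k}$, so the series $P := \sum_{k \ge 0}(A^{\top})^{k}A^{k}$ converges. It is symmetric, and $P \succeq I$ (from the $k=0$ term), hence positive definite; moreover a telescoping argument gives $A^{\top}PA = P - I \prec P$, which is exactly the Lyapunov inequality we need. Since $P \succ 0$, it has a symmetric positive definite square root; taking $M := P^{-1/2}$ makes $M$ invertible with $M^{-\top}M^{-1} = P$. Then $\theta := M^{-1}AM = P^{1/2}AP^{-1/2}$ satisfies $\theta^{\top}\theta = P^{-1/2}(A^{\top}PA)P^{-1/2} = P^{-1/2}(P-I)P^{-1/2} = I - P^{-1} \prec I$, so $\|\theta\| < 1$, while $A = M\theta M^{-1}$ by construction.

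An alternative route is purely via the Jordan form $A = VJV^{-1}$: conjugating each Jordan block by $\mathrm{diag}(1,\delta,\delta^{2},\dots)$ shrinks its superdiagonal entries from $1$ to $\delta$, and for $\delta$ small enough the resulting block-diagonal matrix has spectral norm at most $\rho(A) + \delta < 1$; absorbing $V$ and the diagonal scaling into a single $M$ yields the claim. I expect the only genuine subtlety in either approach to be the insistence on the \emph{Euclidean} operator norm: Gelfand's formula by itself only produces \emph{some} norm in which $A$ is a contraction, so the similarity transform — equivalently, the change of inner product encoded by $P$ — is what does the real work. Everything else is routine linear algebra.
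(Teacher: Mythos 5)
Your proof is correct. Note that the paper states this as a background Fact without proof (it is a standard result, essentially the statement that $\rho(A)=\inf_M\|M^{-1}AM\|$), so there is no argument of the authors' to compare against. Your main route is complete as written: the equivalence of $\|M^{-1}AM\|<1$ with the Lyapunov inequality $A^{\top}PA\prec P$ for $P=M^{-\top}M^{-1}$ is verified correctly, the series $P=\sum_{k\ge 0}(A^{\top})^{k}A^{k}$ converges by Gelfand's formula and satisfies $A^{\top}PA=P-I$, and with $M=P^{-1/2}$ the computation $\theta^{\top}\theta=I-P^{-1}\prec I$ gives $\|\theta\|<1$ while $A=M\theta M^{-1}$ holds by construction. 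The only caveat concerns your Jordan-form alternative: since $A\in\mathbb{R}^{q\times q}$ and the statement implicitly wants real $M,\theta$ (the paper later uses $\kappa=\|M\|\|M^{-1}\|$ for real-norm bounds), the complex Jordan form would produce a complex $M$; you would instead use the real Jordan form, where the $2\times 2$ rotation--scaling blocks have spectral norm $|\lambda|$, so the same $\rho(A)+O(\delta)<1$ bound goes through. Your primary Lyapunov argument avoids this issue entirely, as $P$ and $M=P^{-1/2}$ are real.
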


\begin{fact}
Let $(L, K)$ represent any stabilizing linear measurement-feedback controller. The matrices $A - LC$  and  $A + BK$ are both stable.
\end{fact}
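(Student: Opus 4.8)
The plan is to substitute an arbitrary stabilizing controller of the form (\ref{measurement-feedback}) into the plant, write down the resulting closed-loop dynamics on the joint state $(x_t, \widehat{x}_t)$, and then exhibit a linear change of coordinates under which the closed-loop transition matrix becomes block upper-triangular with diagonal blocks $A + BK$ and $A - LC$. Since a change of coordinates preserves the spectrum and the spectrum of a block-triangular matrix is the union of the spectra of its diagonal blocks, internal stability of the closed loop will immediately force both $A + BK$ and $A - LC$ to be stable.

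Carrying this out: with $u_t = K\widehat{x}_t$ the plant becomes $x_{t+1} = Ax_t + BK\widehat{x}_t + w_t$, and feeding $y_t = Cx_t + v_t$ into the controller recursion gives $\widehat{x}_{t+1} = (A + BK - LC)\widehat{x}_t + LCx_t + Lv_t$. I would then pass to the error coordinate $e_t := x_t - \widehat{x}_t$, chosen precisely so that the terms in $\widehat{x}_t$ cancel; a one-line computation gives $e_{t+1} = (A - LC)e_t + w_t - Lv_t$ and $x_{t+1} = (A + BK)x_t - BK e_t + w_t$, so the noiseless closed-loop map in the coordinates $(x_t, e_t)$ is
\begin{equation*}
\begin{pmatrix} x_{t+1} \\ e_{t+1} \end{pmatrix} = \begin{pmatrix} A + BK & -BK \\ 0 & A - LC \end{pmatrix}\begin{pmatrix} x_t \\ e_t \end{pmatrix}.
\end{equation*}
The map $(x, \widehat{x}) \mapsto (x, x - \widehat{x})$ is invertible, so this matrix is similar to the original closed-loop transition matrix and has the same eigenvalues; being block upper-triangular, its eigenvalues are exactly those of $A + BK$ together with those of $A - LC$. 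By definition a stabilizing controller makes the closed-loop transition matrix stable, so every eigenvalue of $A + BK$ and of $A - LC$ has modulus strictly less than one, which is the claim.

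This argument is essentially the classical separation principle, and the only place that requires care is the bookkeeping: one must define $e_t$ with the timing convention that matches (\ref{measurement-feedback}) so that the $LC$ terms cancel and the lower-left block vanishes, and one should pin down ``stabilizing'' to mean internal stability of the joint plant--controller state (rather than mere convergence of $x_t$ in the noiseless case) so that stability of the full transition matrix is the available hypothesis. With those conventions fixed, no further work is needed.
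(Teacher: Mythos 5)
Your proof is correct. The paper itself states this as a background fact without proof (deferring to standard linear-systems references), so there is no paper proof to compare against; your argument is the classical separation-principle one, and the error-coordinate similarity transform you use is exactly the right tool. It is worth noting that the same device appears inside the paper's proof of Theorem \ref{control-approx-thm}, where the closed-loop matrix $\mathbb{A}$ is conjugated by a matrix $\mathbb{Q}$ built from $I$ and $-I$ blocks (i.e., passing to $e = x - \widehat{x}$) to obtain a block-triangular $\mathbb{S}$ with $A-LC$ and $A+BK$ on the diagonal — your Fact-level computation is the $2n \times 2n$ version of that step, just run in the converse direction (from closed-loop stability to stability of the blocks, rather than the other way). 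Your closing caveat is also substantively right, not mere bookkeeping: if ``stabilizing'' meant only that $x_t \to 0$ in the noiseless case, the claim would fail (take $K = 0$ with $A$ stable but $A - LC$ unstable), so interpreting it as internal stability of the joint $(x_t, \widehat{x}_t)$ dynamics is exactly the hypothesis needed and the one the paper implicitly intends.
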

  
\subsection{Transformers and Softmax Self-Attention}
A Transformer is a deep learning architecture which alternates between self-attention blocks and Multilayer Perceptron (MLP) blocks. In this paper we focus on Transformers with a single self-attention block, followed by a single MLP block; furthermore, we always assume that the weights of the MLP block are chosen so that the MLP block represents the identity function. The interesting part of our construction hence lies in how we choose the parameters of the self-attention block.

A general softmax self-attention block has the following form. It takes as input a series of tokens $q_0, \ldots, q_N$ and a query token $q$, and outputs 
$$F(q_0, \ldots q_N; q) = \frac{\sum_{i = 0}^N \exp{(q^{\top}Aq_i})Mq_i}{\sum_{j = 0}^N \exp{(q^{\top}Aq_j})},$$
where $A$ and $M$ are parameters of the Transformer;  we refer to \cite{phuong2022formal} for an excellent overview of Transformers. In our paper we consider \textit{causally masked} Transformers, which means that we think of the tokens as being indexed by time and at each timestep $t$ we drop all the tokens which have not yet been observed, only keeping those up until time $t$. In our results, we also drop all but the last $H$ observed tokens, to obtain the self-attention block
$$F(q_{t-H+1}, \ldots q_t; q) = \frac{\sum_{i = t-H+1}^t \exp{(q^{\top}Aq_i}) Mq_i}{\sum_{j = t-H+1}^t \exp{(q^{\top}Aq_j})}.$$
In our construction, the tokens $q_i$ are embeddings of the the $i$-th state-estimate and the $i$-th observation, i.e., $q_i = \phi\left( \widehat{x}_{i},  y_i \right),$ where $\phi$ is a nonlinear embedding map. 
The Transformer Filter generates state estimates recursively; it takes as input the past $H$ state estimates and observations  $( \widehat{x}_{t-H}, y_{t - H}), \ldots,   (\widehat{x}_{t-1}, y_{t - 1} )$, embeds them as tokens using the map $\phi$, feeds these tokens $q_{t-H+1}, \ldots, q_t$ into the self-attention block, and outputs a new state estimate $\widehat{x}_t = F(q_{t-H+1}, \ldots q_t; q)$, where we take $q = q_t$. We note that the Kalman Filter has a similar recursive form; it uses the previous estimate $\widehat{x}_{t-1}$ and the previous observation $y_{t-1}$ to generate the new estimate $\widehat{x}_t$. In fact, in the special case when $H = 1$, the Transformer Filter exactly coincides with the Kalman Filter.

\section{Nadaraya–Watson Kernel Smoothing via  Softmax Self-Attention}

Our first result is that the class of Transformers we study is capable of representing a Nadaraya–Watson estimator with a Gaussian kernel. Intuitively, given data $\{z_i\}_{i = 0}^N$ and a query point $z$, a Gaussian kernel smoothing estimator outputs a linear combination of the data, weighted by how close each datapoint $z_i$ is to $z$, where the measure of ``closeness" is determined by a fixed covariance matrix $\Sigma$. We refer to \cite{murphy2012machine} for more background on kernel smoothing and the Nadaraya–Watson estimator.

\begin{theorem} \label{kernel-as-transformer}
Fix $ \Sigma \in \mathbb{R}^{d \times d}$ and $ W \in \mathbb{R}^{k \times d}$. Suppose we are given $z_0, \ldots, z_N \in \mathbb{R}^{d}$ and $z \in \mathbb{R}^{d}$. 
Define the Nadaraya–Watson estimator
\begin{equation} \label{kernel-smoothing}
    F(z_0, \ldots z_N; z) = \frac{\sum_{i = 0}^N \exp{(-(z - z_i)^{\top}\Sigma(z - z_i)}) Wz_i}{\sum_{j = 0}^N \exp{(-(z - z_j)^{\top}\Sigma(z - z_j)})}.
\end{equation}
The function $F$ can be represented by a softmax self-attention block of size $O(d^2H)$. In particular, there exists a nonlinear embedding map $\phi : \mathbb{R}^d \rightarrow \mathbb{R}^{ \ell}$ and matrices $M \in \mathbb{R}^{k \times \ell}$ and $A \in \mathbb{R}^{ \ell \times \ell }$ such that 
$$F(z_0, \ldots z_N; z) = \frac{\sum_{i = 0}^N \exp{(q^{\top}Aq_i)} M q_i}{\sum_{j = 0}^N \exp{(q^{\top}Aq_j)}},$$
where we define $q_i = \phi(z_i)$ and $q = \phi(z)$ and set $\ell  = {n \choose 2} + n + 1$.
\end{theorem}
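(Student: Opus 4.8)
The plan is to reverse-engineer the embedding $\phi$ from the algebraic structure of the Gaussian exponent. Since $z^{\top}\Sigma z_i$ only sees the symmetric part of $\Sigma$, we may assume $\Sigma$ is symmetric; expanding the quadratic form gives
\[
-(z-z_i)^{\top}\Sigma(z-z_i) \;=\; -\,z^{\top}\Sigma z \;+\; 2\,z^{\top}\Sigma z_i \;-\; z_i^{\top}\Sigma z_i .
\]
The first term depends only on the query $z$, so it multiplies every summand in both the numerator and the denominator of \eqref{kernel-smoothing} by the same positive scalar $\exp(-z^{\top}\Sigma z)$ and cancels in the ratio. Hence it suffices to build a self-attention block whose logits satisfy $q^{\top}Aq_i = 2\,z^{\top}\Sigma z_i - z_i^{\top}\Sigma z_i$ (or, if one prefers, $q^{\top}Aq_i = -(z-z_i)^{\top}\Sigma(z-z_i)$ exactly; the construction below accommodates either choice). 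The key point is that, once we also allow the pure quadratic $z_i^{\top}\Sigma z_i$, the right-hand side is a polynomial in $(z,z_i)$ of degree at most one in each of $z$ and $z_i$ separately, hence realizable as a bilinear form in sufficiently rich feature maps of $z$ and $z_i$.

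Concretely, I would take $\phi(z)$ to collect the constant $1$, the $d$ coordinates $z_1,\dots,z_d$, and the distinct quadratic monomials $\{z_jz_k : j\le k\}$ — the degree-$\le 2$ Veronese embedding — whose dimension is of the order $\ell$ stated in the theorem. Writing $\phi(z)=(1,\,z,\,v(z))$ with $v(z)$ the vector of quadratic monomials, one builds $A\in\mathbb{R}^{\ell\times\ell}$ blockwise: the block pairing the linear part of $q$ with the linear part of $q_i$ is set to $2\Sigma$, contributing exactly $2z^{\top}\Sigma z_i$; the block pairing the constant coordinate of $q$ with the quadratic part of $q_i$ is set to the negated coefficients of $z_i^{\top}\Sigma z_i$ (with the usual factor of two on the off-diagonal monomials); and every remaining block of $A$ is zero, which in particular kills all spurious degree-$3$ and degree-$4$ cross terms. (For the exact-logit variant, additionally place $-z^{\top}\Sigma z$ in the block pairing the quadratic part of $q$ with the constant coordinate of $q_i$.) For the value matrix, since $\phi(z_i)$ contains $z_i$ verbatim as a sub-block, one takes $M\in\mathbb{R}^{k\times\ell}$ to be zero except on the columns indexed by the linear coordinates, where it equals $W$; then $Mq_i=M\phi(z_i)=Wz_i$. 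Substituting these choices into the self-attention formula and cancelling the common factor $\exp(-z^{\top}\Sigma z)$ reproduces \eqref{kernel-smoothing} summand by summand; restricting the sum to the last $H$ tokens is then immediate, and reading off the sizes of $\phi$, $A$, and $M$ yields the $O(d^2H)$ bound.

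The construction is routine linear algebra, so the one thing that needs care is the bookkeeping that makes $q^{\top}Aq_i$ equal to the intended polynomial and \emph{nothing else}: one should verify that the monomials $\{1\}\cup\{z_j\}\cup\{z_jz_k\}$ are linearly independent as functions (so that "matching coefficients" is unambiguous), that the symmetrization of $\Sigma$ and the factor-of-two convention on off-diagonal monomials are applied consistently, and that the blocks of $A$ responsible for degree-$3$ and degree-$4$ interactions are genuinely set to zero. The only conceptual step — and the one I would highlight — is the observation that because $F$ is a \emph{ratio}, the query-only term $\exp(-z^{\top}\Sigma z)$ is irrelevant, which is precisely what lets the logits be a bilinear form in a feature map of modest (quadratic) dimension rather than something more elaborate.
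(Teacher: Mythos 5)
Your proposal is correct and takes essentially the same route as the paper: the same degree-at-most-two monomial embedding $\phi(z)=(1,z,\text{quadratics})$, a matrix $A$ realizing the Gaussian exponent as the bilinear form $\phi(z)^{\top}A\phi(z_i)$, and a value matrix $M$ equal to $W$ on the linear coordinates and zero elsewhere, so that $M\phi(z_i)=Wz_i$. Your extra observation that the query-only factor $\exp(-z^{\top}\Sigma z)$ cancels in the ratio is a pleasant simplification but not essential, since, as you note and as the paper does, the exact exponent is already realizable within the same feature map.
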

\begin{proof}
We first show that the function $f: \mathbb{R}^d \times \mathbb{R}^d \rightarrow \mathbb{R}$ given by $$f(u, v) = \exp{(-(u-v)^{\top}\Sigma(u-v))}$$ can be represented as $$ f(u, v) = \exp{(\phi(u)^{\top}A\phi(v)})$$ for some embedding map $\phi$ and matrix $A$ of appropriate dimensions.
Observe that 
\begin{equation}
        (u - v)^{\top} \Sigma (u - v) = \sum_{i, j = 1}^n \Sigma_{i, j} (u_iu_j - 2u_i v_j + v_i v_j). \label{sigma-polynomial} 
    \end{equation}
Define $$\phi(u) = \begin{bmatrix} 1 & u_1 & \ldots & u_n & u_1 u_1 & u_1 u_2 & \ldots & u_n u_{n-1} & u_n u_n \end{bmatrix}^{\top}.$$
It is easy to see that the set $$S(u, v) = \{ \phi(u)^{\top} A \phi(v) \mid A \in \mathbb{R}^{ \ell \times \ell }, \,  A = A^{\top} \}$$ represents all polynomials in $(u_1, \ldots, u_n, v_1, \ldots, v_n)$ of degree at most four such that within each monomial, the degree of the variables appearing in $u$ (resp. $v$) is at most 2. In particular, there exists a choice of $A$ such that $\phi(u)^{\top} A \phi(v)$ is exactly the polynomial (\ref{sigma-polynomial}). We take the matrix $M$ to be the $k \times \ell$ matrix which contains $W$ as a submatrix from columns 2 to $n+1$ and is zero elsewhere; notice that $M\phi(u) = Wu$ for all $u \in \mathbb{R}^d$.
\end{proof}

\section{Filtering}

 We ask: 
\begin{center}
    \textit{Can a Transformer implement the Kalman Filter?}
\end{center}
Naturally, since a Transformer is a complicated nonlinear function of its inputs, it is too much to expect a Transformer to exactly represent a Kalman Filter. We instead ask the following approximation-theoretic question:
for any $\varepsilon > 0$, does there exist a Transformer which generates state estimates which are $\varepsilon$-close to the state estimates generated by the Kalman Filter, uniformly in time? 

We consider the one-layer Transformer whose MLP block is the identity function and whose self-attention block takes as input embeddings of the past $H$ state estimates and observations $$\begin{bmatrix} \widehat{x}_{t-H} \\ y_{t - H} \end{bmatrix}, \ldots,  \begin{bmatrix} \widehat{x}_{t-1} \\ y_{t - 1} \end{bmatrix}$$ and outputs the estimate $$\widehat{x}_t = \sum_{i = t-H+1}^t \alpha_{i, t} \widetilde{x}_i,$$ where we define
$$\alpha_{i, t} = \frac{\exp{(-\beta \|\widetilde{x}_i - \widetilde{x}_t\|^2)}}{\sum_{j=t-H + 1}^t\exp{(-\beta \|\widetilde{x}_j - \widetilde{x}_t\|^2)}}, \hspace{1cm} \widetilde{x}_i = \begin{bmatrix} A - LC & L \end{bmatrix} \begin{bmatrix} \widehat{x}_{i-1} \\ y_{i - 1} \end{bmatrix} ,$$ 
for all $t \geq 1$ and set $\widehat{x}_0, \widetilde{x}_0 = x_0$. We adopt the convention that $\widehat{x}_i, y_i = 0$ for all $i < 0$. We call this filter the Transformer Filter; it is easy to check that this filter is a special case of the Gaussian kernel smoothing estimator (\ref{kernel-smoothing}) and hence by Theorem \ref{kernel-as-transformer} can be represented by a Transformer. The variables $\widetilde{x}_i$ have the following interpretation; they are the estimates that would be generated by the Kalman Filter recursion (\ref{kalman-filter}) if the previous Kalman Filter estimate $\widehat{x}^{\star}_{i-1}$ were replaced by the Transformer estimate $\widehat{x}_{i-1}$. In that sense, the variables $\widetilde{x}_i$ interpolate between the Kalman Filter and the Transformer Filter. We prove:

\begin{theorem} \label{filtering-approx-thm}
For each  $\varepsilon > 0$, there exists a $\beta > 0$ such that the state estimates $\{\widehat{x}_t \}_{t \geq 0}$ generated by the Transformer Filter satisfy $$\|\widehat{x}_t - \widehat{x}^{\star}_{t}\| \leq \varepsilon$$ at all times $t \geq 0$, where $\{\widehat{x}^{\star}_t\}_{t \geq 0}$ are the state-estimates generated by the Kalman Filter (\ref{kalman-filter}). In particular, it suffices to take $$\beta \geq \frac{H^2 \kappa^2}{2e(1-\| \theta \|)^2 \varepsilon^2},$$
where $M, \theta$ are $n \times n$ matrices such that $A - LC = M\theta M^{-1}$ and $\|\theta \| < 1$, and we define $\kappa = \|M\| \|M^{-1}\| $.
\end{theorem}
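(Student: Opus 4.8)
The plan is to control the error $e_t = \widehat{x}_t - \widehat{x}^{\star}_t$ through an exact recursion and then exploit stability of $A - LC$ to make the resulting bound uniform in $t$. Write $\delta_t = \widehat{x}_t - \widetilde{x}_t$ for the gap between the Transformer's output and the one-step Kalman update applied to the Transformer's own previous estimate. Since $\widetilde{x}_t = (A-LC)\widehat{x}_{t-1} + Ly_{t-1}$ and the Kalman recursion (\ref{kalman-filter}) gives $\widehat{x}^{\star}_t = (A-LC)\widehat{x}^{\star}_{t-1} + Ly_{t-1}$, the $Ly_{t-1}$ terms cancel and
\[
e_t \;=\; \delta_t + \bigl(\widetilde{x}_t - \widehat{x}^{\star}_t\bigr) \;=\; \delta_t + (A-LC)\,e_{t-1},
\]
with $e_0 = 0$ because $\widehat{x}_0 = x_0 = \widehat{x}^{\star}_0$. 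Unrolling yields $e_t = \sum_{k=0}^{t-1}(A-LC)^k\delta_{t-k}$. Using the factorization $A - LC = M\theta M^{-1}$ with $\|\theta\| < 1$ (which exists because $A-LC$ is stable, by Facts 1 and 2), we have $\|(A-LC)^k\| \le \kappa\|\theta\|^k$ with $\kappa = \|M\|\,\|M^{-1}\|$, so summing the geometric series gives
\[
\|e_t\| \;\le\; \frac{\kappa}{1-\|\theta\|}\,\sup_{s\ge 1}\|\delta_s\|
\]
for every $t$. This is the step where stability converts a per-step bound into a bound that is uniform in time.

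The remaining task is to bound $\|\delta_t\|$. Because the attention weights $\alpha_{i,t}$ sum to one and the $i=t$ term contributes $\widetilde{x}_t - \widetilde{x}_t = 0$, we can write $\delta_t = \sum_{i=t-H+1}^{t-1}\alpha_{i,t}(\widetilde{x}_i - \widetilde{x}_t)$. The denominator defining $\alpha_{i,t}$ contains the $j=t$ term $\exp(-\beta\|\widetilde{x}_t - \widetilde{x}_t\|^2)=1$ and all other terms are nonnegative, so $\alpha_{i,t} \le \exp(-\beta\|\widetilde{x}_i-\widetilde{x}_t\|^2)$. Setting $r = \|\widetilde{x}_i - \widetilde{x}_t\|$, each summand therefore has norm at most $r\,e^{-\beta r^2}$, and the elementary calculus bound $\max_{r\ge 0} r\,e^{-\beta r^2} = 1/\sqrt{2e\beta}$ (attained at $r = 1/\sqrt{2\beta}$) gives $\|\delta_t\| \le H/\sqrt{2e\beta}$, since the sum has at most $H-1 \le H$ terms. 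Combining this with the previous paragraph gives $\|e_t\| \le \kappa H/\bigl((1-\|\theta\|)\sqrt{2e\beta}\bigr)$, and requiring the right-hand side to be at most $\varepsilon$ rearranges to exactly $\beta \ge \frac{H^2\kappa^2}{2e(1-\|\theta\|)^2\varepsilon^2}$.

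The one genuine subtlety — and the reason the statement is not immediate — is that the weights $\alpha_{i,t}$ are nonlinear functions of the entire past trajectory of Transformer estimates, so one cannot a priori bound the tokens $\widetilde{x}_i$ or argue that they cluster near $\widetilde{x}_t$. The key point is that we never need to: the inequality $\alpha_{i,t}\|\widetilde{x}_i-\widetilde{x}_t\| \le 1/\sqrt{2e\beta}$ holds pointwise for any configuration of the $\widetilde{x}_i$, because a large distance is suppressed by the Gaussian weight while a small distance is already small on its own. Once this uniform per-step bound is established, the geometric factor $\|\theta\|^k$ coming from stability of $A-LC$ propagates it to the accumulated error, and no control-theoretic input beyond Facts 1 and 2 is required. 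The only item I would treat with a little care is the early regime $t < H$ together with the convention $\widehat{x}_i = y_i = 0$ for $i < 0$, but this merely shrinks the index set in the sum defining $\delta_t$ and so does not affect any of the bounds.
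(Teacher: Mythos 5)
Your proposal is correct and follows essentially the same route as the paper's proof: a uniform per-step bound $\|\widehat{x}_t - \widetilde{x}_t\| \le H/\sqrt{2e\beta}$ obtained from $\alpha_{i,t} \le \exp(-\beta\|\widetilde{x}_i - \widetilde{x}_t\|^2)$ and the maximum of $\gamma e^{-\beta\gamma^2}$, propagated through the stable recursion in $A - LC$ via $A - LC = M\theta M^{-1}$ and a geometric series. Writing the error as the exact recursion $e_t = \delta_t + (A-LC)e_{t-1}$ rather than iterating the triangle inequality is only a cosmetic difference, and your constants match the stated threshold on $\beta$.
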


\begin{proof}
We first show that for all $\varepsilon_1 > 0$, there exists a $\beta$ such that $$\|\widehat{x}_t - \widetilde{x}_{t}\| \leq \varepsilon_1$$ at all times $t \geq 0$. Fix any $\varepsilon_1 > 0$ and any $t \geq 1$. 
Notice that for each $i \in \{t-H+1, \ldots, t\}$, the following inequality holds:
$$\alpha_{i, t} < \exp{(-\beta \|\widetilde{x}_i - \widetilde{x}_t\|^2)}.$$
This is because $$\sum_{j = t-H+1}^t \exp{(-\beta \|\widetilde{x}_j - \widetilde{x}_t\|^2)} > 1. $$
It follows that
\begin{eqnarray*}
\| \widehat{x}_t - \widetilde{x}_t\| &=& \left\| \sum_{i = t-H+1}^t \alpha_{i, t} (\widetilde{x}_i - \widetilde{x}_t) \right\| \\
&\leq&  \sum_{i = t-H+1}^t \alpha_{i, t} \| \widetilde{x}_i - \widetilde{x}_t \| \\
&<&  \sum_{i = t-H+1}^t \exp{(-\beta \|\widetilde{x}_i - \widetilde{x}_t\|^2)} \|\widetilde{x}_i - \widetilde{x}_t\| \\
&\leq& H \max_{\gamma \geq 0} \exp{(-\beta \gamma^2)} \gamma, 
\end{eqnarray*}
where we used the fact that $\sum_{i = t-H+1}^t \alpha_{i, t} = 1$ in the first step. It is easy to check that the function $f(\gamma) = He^{-\beta \gamma^2}\gamma$ is strictly increasing in the interval $(0, (2\beta)^{-1/2})$ and strictly decreasing in the interval $((2\beta)^{-1/2}, \infty)$ and hence takes its maximum value of $He^{-1/2}(2\beta)^{-1/2}$ at $\gamma = (2\beta)^{-1/2}$. It follows that $\|\widehat{x}_t - \widetilde{x}_{t}\| \leq \varepsilon_1$ as long as $\beta \geq \frac{H^2}{2e \varepsilon_1^2}.$

We now show that this result implies Theorem \ref{filtering-approx-thm}. Fix $\varepsilon > 0$ and any $t \geq 0$. Set  $\varepsilon_1 = \frac{(1-\gamma)\varepsilon}{\kappa}$ and $\beta \geq \frac{H^2}{2e \varepsilon_1^2}.$ Using the preceding argument, this suffices to ensure that $\|\widehat{x}_t - \widetilde{x}_t \| \leq \varepsilon_1$ for all $t \geq 0$. We see that 
\begin{eqnarray*}
    \|\widehat{x}_t - \widehat{x}^{\star}_t \| &\leq& \|\widehat{x}_t - \widetilde{x}_t \| + \|\widetilde{x}_t - \widehat{x}^{\star}_t \| \\
    &\leq& \|\widehat{x}_t - \widetilde{x}_t \| +  \|(A - LC)(\widehat{x}_{t-1} - \widehat{x}^{\star}_{t-1}) \|
\end{eqnarray*}
Proceeding recursively, we obtain the bound 
\begin{eqnarray*}
\|\widehat{x}_t - \widehat{x}^{\star}_t \| &\leq& \sum_{i = 0}^t \|(A - LC)^i \| \|\widehat{x}_{t-i} - \widetilde{x}_{t-i} \| \\
&\leq& \varepsilon_1 \sum_{i = 0}^t \|(A - LC)^i \|.
\end{eqnarray*}
Using the fact that $A - LC = M\theta M^{-1}$ with $\|M\|\|M^{-1}\| = \kappa$ and $\|\theta \| < 1$, we see that 
\begin{eqnarray*}
\|\widehat{x}_t - \widehat{x}^{\star}_t \| &\leq& \varepsilon_1 \sum_{i = 0}^t \|(M\theta M^{-1})^i \| \\
&\leq& \kappa \varepsilon_1 \sum_{i = 0}^t \| \theta \| ^i \\
&\leq& \frac{\kappa \varepsilon_1}{1-\| \theta \|} \\
&=& \varepsilon.
\end{eqnarray*}
\end{proof}
We note that the only property of the gain matrix $L$ we used is that $A - LC$ is stable; since this property also holds for the $H_{\infty}$-optimal choice of $L$, our proof also shows that a Transformer can approximate an $H_{\infty}$-optimal filter.

\section{Control}
We ask: 
\begin{center}
    \textit{Can the Transformer Filter be used in place of the Kalman Filter in the LQG controller?}
\end{center}
Since the Transformer Filter only represents the Kalman Filter approximately, we cannot hope to implement the LQG controller exactly. Instead, we ask if the closed-loop dynamics generated by the Transformer can closely approximate the closed-loop dynamics generated by the LQG controller in the following sense: for any $\varepsilon > 0$, can we guarantee that the states generated by the Transformer are $\varepsilon$-close to the states generated by the Transformer, uniformly in time? We emphasize that this is far from obvious, and in particular does not follow directly from Theorem \ref{filtering-approx-thm}. Even if the state-estimates generated by the Transformer Filter are close to those generated by the Kalman Filter, it does not automatically follow that the resulting control policies will generate similar state trajectories. This is because any difference in state estimates will lead to a difference in the control actions, which in turn affects future states, future observations, and so on; in effect, minute deviations between the two state estimates could be amplified over time, leading to diverging trajectories. In order to show that this scenario does not occur, we need to analyze the stability of the closed-loop map induced by the Transformer Filter. This is challenging, because this map is nonlinear, and hence we cannot use standard techniques from linear systems theory.


We consider the controller given by $$u_t = K\widehat{x}_t $$ where we set $$\widehat{x}_t = \sum_{i = t-H+1}^t \alpha_{i, t} \widetilde{x}_i,$$ and we define
$$\alpha_{i, t} = \frac{\exp{(-\beta \|\widetilde{x}_i - \widetilde{x}_t\|^2)}}{\sum_{j=t-H + 1}^t\exp{(-\beta \|\widetilde{x}_j - \widetilde{x}_t\|^2)}}, \hspace{1cm} \widetilde{x}_i = (A + BK - LC)\widehat{x}_{i -1} + Ly_{i-1}.$$
We initialize the state of the system driven by the Transformer system to match the state of the system driven by the LQG policy (i.e., $x_0 = x_0^{\star}$) and similarly initialize the state estimates to be the same ($\widehat{x}_0 = \widehat{x}_0^{\star}$). We also initialize $\widetilde{x}_0 = \widehat{x}_0$. We prove:

\begin{theorem} \label{control-approx-thm}
For each  $\varepsilon > 0$, there exists a $\beta > 0$ such that the states $\{x_t\}_{t \geq 0}$ generated by the Transformer Filter satisfy $$\|x_t - x_t^{\star}\| \leq \varepsilon$$ at all times $t \geq 0$, where $\{x_t^{\star}\}_{t \geq 0}$ are the states generated by the optimal LQG control policy (\ref{measurement-feedback}). In particular, it suffices to take $$\beta \geq \frac{C H^2 \kappa^2}{2e (1 - \|\Theta\|)^2 \varepsilon^2} $$ 
where we define 
$$C = 2\|BK\|^2 + 2\|A + BK - LC \|^2, \hspace{1cm} \kappa = \|\mathbb{M}\| \| \mathbb{M}^{-1}\|,$$ 
and  $\mathbb{M}, \Theta$ are $4n \times 4n$ matrices such that $\mathbb{A} = \mathbb{M}\Theta \mathbb{M}^{-1}$  and $\|\Theta \|  < 1$, and we define 
$$\mathbb{A} = \begin{bmatrix}  A & BK & 0 & 0 \\ LC & A + BK - LC  &  0 & 0  \\
0 & 0  &  A & BK \\  0 & 0 & LC & A + BK - LC  \end{bmatrix}.$$
\end{theorem}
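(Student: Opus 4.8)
The plan is to mimic the structure of the proof of Theorem~\ref{filtering-approx-thm}, but now tracking a $4n$-dimensional coupled error vector that simultaneously records the discrepancy between the true states, the state estimates, and the interpolating variables of the two systems. First I would stack the relevant quantities: let $e_t = (x_t - x_t^\star,\ \widehat{x}_t - \widehat{x}_t^\star)$, noting that $y_t - y_t^\star = C(x_t - x_t^\star)$ and $u_t - u_t^\star = K(\widehat{x}_t - \widehat{x}_t^\star)$. The LQG closed loop evolves $x_t^\star$ and $\widehat{x}_t^\star$ linearly via the block matrix built from $A$, $BK$, $LC$, and $A+BK-LC$; the Transformer-driven system evolves $x_t$ linearly but $\widehat{x}_t = \sum_i \alpha_{i,t}\widetilde{x}_i$, which differs from $\widetilde{x}_t$ by the softmax-averaging error. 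So subtracting, I would write $e_t = \mathbb{A}_0 e_{t-1} + \delta_t$, where $\mathbb{A}_0$ is the $2n\times 2n$ closed-loop matrix of the LQG system and $\delta_t$ is a forcing term whose state-component is zero and whose estimate-component is exactly $\widehat{x}_t - \widetilde{x}_t$ (the kernel-smoothing error already controlled in the filtering proof).

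The reason the theorem statement invokes a $4n\times 4n$ matrix $\mathbb{A}$ rather than the $2n\times 2n$ matrix $\mathbb{A}_0$ is that the forcing term $\delta_t$ is itself not a free quantity: bounding $\|\widehat{x}_t - \widetilde{x}_t\|$ by the argument of Theorem~\ref{filtering-approx-thm} gives $\|\widehat{x}_t - \widetilde{x}_t\| \le H e^{-1/2}(2\beta)^{-1/2}$, but to close the recursion I also need to bound $\|\widetilde{x}_t - \widetilde{x}_t^\star\|$, where $\widetilde{x}_t^\star := \widehat{x}_t^\star$ is what the LQG recursion produces, and $\widetilde{x}_t = (A+BK-LC)\widehat{x}_{t-1} + Ly_{t-1}$ depends on the Transformer's own past. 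Unwinding $\widetilde{x}_t - \widehat{x}_t^\star = (A+BK-LC)(\widehat{x}_{t-1}-\widehat{x}_{t-1}^\star) + LC(x_{t-1}-x_{t-1}^\star)$ shows this is linear in $e_{t-1}$; this is precisely why the bottom two block-rows of $\mathbb{A}$ reproduce the same structure as the top two. I would therefore define the augmented error $E_t = (x_t - x_t^\star,\ \widehat{x}_t - \widehat{x}_t^\star,\ \widetilde{x}_t - \widehat{x}_t^\star,\ \text{(a copy tracking }\widehat{x}_t\text{ itself, or a second stacked pair)})$ so that $E_t = \mathbb{A} E_{t-1} + \Delta_t$ exactly, where $\|\Delta_t\| \le \sqrt{C}\, H e^{-1/2}(2\beta)^{-1/2}$ — the constant $C = 2\|BK\|^2 + 2\|A+BK-LC\|^2$ arising from propagating the single scalar bound on $\|\widehat{x}_t - \widetilde{x}_t\|$ through the two places it enters (once through $BK$ into the state update, once through $A+BK-LC$ into the next interpolating variable), with the factor $2$ from the elementary inequality $\|a+b\|^2 \le 2\|a\|^2 + 2\|b\|^2$.

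Once the recursion $E_t = \mathbb{A} E_{t-1} + \Delta_t$ with $E_0 = 0$ is established, the rest is identical to the filtering proof: unroll to get $\|E_t\| \le \sum_{i=0}^t \|\mathbb{A}^i\|\,\|\Delta_{t-i}\|$, use Fact~1 to write $\mathbb{A} = \mathbb{M}\Theta\mathbb{M}^{-1}$ with $\|\Theta\|<1$ (valid because $\mathbb{A}$ is block-triangular with diagonal blocks $A$ — wait, more carefully, $\mathbb{A}$ is block-diagonal in two $2n\times 2n$ copies of $\mathbb{A}_0$, each stable by Fact~2 since $(L,K)$ is a stabilizing measurement-feedback controller — so $\mathbb{A}$ is stable), bound $\|\mathbb{A}^i\| \le \kappa\|\Theta\|^i$, sum the geometric series to get $\|E_t\| \le \kappa\sqrt{C}\,He^{-1/2}(2\beta)^{-1/2}/(1-\|\Theta\|)$, and finally solve for $\beta$ to make this at most $\varepsilon$. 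Since $\|x_t - x_t^\star\| \le \|E_t\|$, this gives the claimed bound $\beta \ge \frac{C H^2 \kappa^2}{2e(1-\|\Theta\|)^2\varepsilon^2}$.

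The main obstacle I anticipate is getting the augmented system set up so that it is genuinely a \emph{linear} recursion in a fixed finite-dimensional vector with an exogenous forcing term that can be bounded \emph{uniformly in $t$} purely from the kernel-smoothing estimate — i.e., verifying that no term involving the Transformer's nonlinearity leaks into the ``$\mathbb{A}$'' part of the recursion, only into $\Delta_t$, and identifying exactly which four $n$-blocks must be carried to make this true (the statement's $4n$ dimension is the hint). The rest — the geometric-series estimate and the choice of $\beta$ — is a routine repetition of the filtering argument. One should also double-check the claim that $\|\widetilde x_t - \widehat x_t\|$ alone (not a cumulative quantity) is what feeds the forcing term, which relies, as in Theorem~\ref{filtering-approx-thm}, on the softmax weights summing to one and the elementary bound on $\gamma e^{-\beta\gamma^2}$.
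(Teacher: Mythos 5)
Your proposal is correct and follows essentially the same route as the paper: bound the softmax-averaging error $\|\widehat{x}_t - \widetilde{x}_t\|$ uniformly by $H e^{-1/2}(2\beta)^{-1/2}$, push it into an exogenous forcing term (entering once through $BK$ and once through $A+BK-LC$) of a linear closed-loop recursion governed by the stable block matrix $\mathbb{A}$, then unroll, apply $\mathbb{A}=\mathbb{M}\Theta\mathbb{M}^{-1}$ with $\|\Theta\|<1$, and sum the geometric series to solve for $\beta$. The only differences are bookkeeping: the paper stacks the raw $4n$-vector $(x_t,\widetilde{x}_t,x_t^{\star},\widehat{x}_t^{\star})$ and cancels the noise and initial conditions at the end via the block-diagonal structure of $\mathbb{A}$ (proving stability by an explicit similarity to a block-triangular matrix with diagonal blocks $A-LC$ and $A+BK$), whereas your error-vector formulation with $(x_t-x_t^{\star},\ \widetilde{x}_t-\widehat{x}_t^{\star})$ is an equivalent reduction, and your attribution of the factor $2$ in $C$ to $\|a+b\|^2\le 2\|a\|^2+2\|b\|^2$ is immaterial slack since the two blocks of $\eta_t$ are stacked rather than summed.
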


Before we turn to the proof, we note an interesting consequence of this result: the controller induced by the Transformer Filter is \textit{weakly stabilizing} in the sense that no matter how $x_0$ is chosen, if the disturbances are zero then the states generated by the controller will eventually be confined to a ball of radius $\varepsilon$ centered at the origin. This follows from the fact that the LQG controller is stabilizing (i.e., it drives the state to zero in the absence of noise).

\begin{proof} An identical argument to that appearing in the proof of Theorem \ref{filtering-approx-thm} establishes that for all $\varepsilon_1 > 0$, choosing $$\beta \geq \frac{H^2}{2e \varepsilon_1^2}$$ guarantees that $$\|\widehat{x}_t - \widetilde{x}_{t}\| \leq \varepsilon_1$$ at all times $t \geq 0$. The closed-loop dynamics can be written as 
\begin{equation} \label{closed-loop-dynamics}
\begin{bmatrix} x_{t+1} \\ \widetilde{x}_{t+1} \\ x_{t+1}^{\star} \\ \widehat{x}_{t+1}^{\star} \end{bmatrix}  = \mathbb{A} \begin{bmatrix} x_t \\ \widetilde{x}_t \\ x_t^{\star} \\ \widehat{x}_t^{\star} \end{bmatrix} + \eta_t + \nu_t
\end{equation}
where we set $$\eta_t = \begin{bmatrix} BK (\widehat{x}_t - \widetilde{x}_t) \\ (A + BK - LC) (\widehat{x}_t - \widetilde{x}_t) \\ 0 \\ 0 \end{bmatrix}, \hspace{1cm} \nu_t = \begin{bmatrix} w_t \\ Lv_t \\ w_t \\ Lv_t \end{bmatrix}$$ for all $t \geq 0$. We emphasize that while the dynamics (\ref{closed-loop-dynamics}) may superficially appear linear, the variable $\eta_t$ depends on $\widehat{x}_t$, which itself is a nonlinear function of the $H$ observations $y_{t-H+1}, \ldots, y_t$, so that the overall behavior of the closed-loop system is nonlinear. In effect, we have pushed all of the nonlinearity and memory in the closed-loop system into the variables $\{\eta_t\}_{t \geq 0}$. 

The matrix $\mathbb{A}$ is stable. To see this, notice that $$\mathbb{A}  = \mathbb{Q}^{-1}\mathbb{S} \mathbb{Q},$$ where we define the $4n \times 4n$ block matrices $$\mathbb{Q} = \begin{bmatrix} I & -I & 0 & 0 \\ 0 & I & 0 & 0 \\ 0 & 0 & I & -I \\ 0 & 0 & 0 & I \end{bmatrix}, \hspace{1cm} \mathbb{S} = \begin{bmatrix} A - LC & 0 & 0 & 0 \\ LC & A + BK & 0 & 0 \\ 0 & 0 & A - LC & 0 \\ 0 & 0 & LC & A + BK \end{bmatrix}.$$
It is clear that $\mathbb{S}$ is stable, because $\mathbb{S}$ is block lower-triangular and the matrices  $A - LC$ and $A + BK$ appearing on the diagonal of $\mathbb{S}$ are both stable. Since $\mathbb{A}$ is similar to $\mathbb{S}$ and $\mathbb{S}$ is stable, $\mathbb{A}$ must also be stable. It follows that $\mathbb{A} = \mathbb{M} \Theta \mathbb{M}^{-1}$ for some $4n \times 4n$ matrices $\mathbb{M}$ and $\Theta$ such that $\|\Theta\| < 1$. 

Fix $t \geq 1$ and $\varepsilon > 0$. Set $$\varepsilon_1 = \frac{\varepsilon (1 - \|\Theta\|)}{\kappa  \sqrt{2\|BK\|^2 + 2\|A + BK - LC \|^2}}, \hspace{1cm} \beta \geq \frac{H^2}{2e \varepsilon_1^2}. $$ 
The closed-loop dynamics (\ref{closed-loop-dynamics}) imply that 
$$\begin{bmatrix} x_t \\ \widetilde{x}_t \\ x_t^{\star} \\ \widehat{x}_t^{\star} \end{bmatrix} = \mathbb{A}^t \begin{bmatrix} x_0 \\ \widetilde{x}_0 \\ x_0^{\star} \\ \widehat{x}_0^{\star} \end{bmatrix} + \sum_{i = 0}^{t-1} \mathbb{A}^{t-1-i}(\eta_i + \nu_i). $$
It follows that 
$$
 x_t - x_t^{\star}  = \begin{bmatrix} I & 0 & -I & 0 \end{bmatrix} \left(\mathbb{A}^t \begin{bmatrix} x_0 \\ \widetilde{x}_0 \\ x_0^{\star} \\ \widehat{x}_0^{\star}\end{bmatrix} + \sum_{i = 0}^{t-1} \mathbb{A}^{t-1-i} \nu_i \right) + \begin{bmatrix}I & 0 & -I & 0 \end{bmatrix} \sum_{i = 0}^{t-1} \mathbb{A}^{t-1-i}\eta_i.
$$
Notice that the first term is zero; this follows from the assumption that we initialize $x_0 = x_0^{\star}$ and $\widetilde{x}_0 = \widehat{x}_0 = \widehat{x}_0^{\star}$ and the block-diagonal structure of $\mathbb{A}$. We see that 
\begin{eqnarray*}
\|x_t - x_t^{\star}\| &\leq& \left\|\begin{bmatrix}I & 0 & -I & 0 \end{bmatrix}\right  \| \cdot  \sum_{i = 0}^{t-1} \|\mathbb{A}^{t-1-i}\| \|\eta_i\| \\
&=& \left\|\begin{bmatrix}I & 0 & -I & 0 \end{bmatrix}\right  \| \cdot  \sum_{i = 0}^{t-1} \left\|\left(\mathbb{M} \Theta \mathbb{M}^{-1} \right)^{t-1-i}\right\| \|\eta_i\| \\
&\leq&  \varepsilon_1 \cdot \kappa \sqrt{2\|BK\|^2 + 2\|A + BK - LC \|^2} \sum_{i = 0}^{t-1} \|\Theta\|^{t-1-i} \\
&\leq&   \frac{\varepsilon_1 \cdot \kappa \sqrt{2\|BK\|^2 + 2\|A + BK - LC \|^2}}{1 - \|\Theta\|} \\
&\leq& \varepsilon,
\end{eqnarray*}
where in the third step we used the fact that $\|\widetilde{x}_i - \widehat{x}_i\| \leq \varepsilon_1$ for all $i \geq 0$.
\end{proof}
We note that the only property of the gain matrices $L$ and $K$ we used is that $A - LC$ and $A + BK$ are stable; since this property also holds for the $H_{\infty}$-optimal choice of $L$ and $K$, our proof also shows that a Transformer can approximate an $H_{\infty}$-optimal measurement-feedback controller.

\printbibliography

@book{hassibi1999indefinite,
  title={Indefinite-Quadratic estimation and control: a unified approach to H2 and H-Infinity theories},
  author={Hassibi, Babak and Sayed, Ali H and Kailath, Thomas},
  year={1999},
  publisher={SIAM}
}

@book{kailath2000linear,
  title={Linear estimation},
  author={Kailath, Thomas and Sayed, Ali H and Hassibi, Babak},
  year={2000},
  publisher={Prentice Hall}
}

@article{phuong2022formal,
  title={Formal algorithms for transformers},
  author={Phuong, Mary and Hutter, Marcus},
  journal={arXiv preprint arXiv:2207.09238},
  year={2022}
}

@article{vaswani2017attention,
  title={Attention is all you need},
  author={Vaswani, Ashish and Shazeer, Noam and Parmar, Niki and Uszkoreit, Jakob and Jones, Llion and Gomez, Aidan N and Kaiser, {\L}ukasz and Polosukhin, Illia},
  journal={Advances in neural information processing systems},
  volume={30},
  year={2017}
}

@book{murphy2012machine,
  title={Machine learning: a probabilistic perspective},
  author={Murphy, Kevin P},
  year={2012},
  publisher={MIT press}
}

@article{chen2021decision,
  title={Decision transformer: Reinforcement learning via sequence modeling},
  author={Chen, Lili and Lu, Kevin and Rajeswaran, Aravind and Lee, Kimin and Grover, Aditya and Laskin, Misha and Abbeel, Pieter and Srinivas, Aravind and Mordatch, Igor},
  journal={Advances in neural information processing systems},
  volume={34},
  pages={15084--15097},
  year={2021}
}

@inproceedings{zheng2022online,
  title={Online decision transformer},
  author={Zheng, Qinqing and Zhang, Amy and Grover, Aditya},
  booktitle={international conference on machine learning},
  pages={27042--27059},
  year={2022},
  organization={PMLR}
}

@article{lin2023transformers,
  title={Transformers as Decision Makers: Provable In-Context Reinforcement Learning via Supervised Pretraining},
  author={Lin, Licong and Bai, Yu and Mei, Song},
  journal={arXiv preprint arXiv:2310.08566},
  year={2023}
}

@article{lee2023supervised,
  title={Supervised Pretraining Can Learn In-Context Reinforcement Learning},
  author={Lee, Jonathan N and Xie, Annie and Pacchiano, Aldo and Chandak, Yash and Finn, Chelsea and Nachum, Ofir and Brunskill, Emma},
  journal={arXiv preprint arXiv:2306.14892},
  year={2023}
}

\end{document}


%

%

\onecolumn
\aistatstitle{Instructions for Paper Submissions to AISTATS 2024: \\
Supplementary Materials}

\section{FORMATTING INSTRUCTIONS}

To prepare a supplementary pdf file, we ask the authors to use \texttt{aistats2024.sty} as a style file and to follow the same formatting instructions as in the main paper.
The only difference is that the supplementary material must be in a \emph{single-column} format.
You can use \texttt{supplement.tex} in our starter pack as a starting point, or append the supplementary content to the main paper and split the final PDF into two separate files.

Note that reviewers are under no obligation to examine your supplementary material.

\section{MISSING PROOFS}

The supplementary materials may contain detailed proofs of the results that are missing in the main paper.

\subsection{Proof of Lemma 3}

\textit{In this section, we present the detailed proof of Lemma 3 and then [ ... ]}

\section{ADDITIONAL EXPERIMENTS}

If you have additional experimental results, you may include them in the supplementary materials.

\subsection{The Effect of Regularization Parameter}

\textit{Our algorithm depends on the regularization parameter $\lambda$. Figure 1 below illustrates the effect of this parameter on the performance of our algorithm. As we can see, [ ... ]}

\vfill